\documentclass[11pt]{article}

\usepackage[final]{acl}

\usepackage{times}
\usepackage{latexsym}

\usepackage[T1]{fontenc}

\usepackage[utf8]{inputenc}

\usepackage{microtype}

\usepackage{inconsolata}

\usepackage{graphicx}

\usepackage{amsmath}
\usepackage{amssymb}
\usepackage{amsthm}
\usepackage{algorithm}
\usepackage{algpseudocode}
\usepackage{booktabs}
\usepackage{arydshln}

\newtheorem{theorem}{Theorem}

\title{ASymPO: Asymmetric-Scale Policy Optimization for Asynchronous LLM Post-Training Without Behavior Information}

\author{Zehua Liu, Yuxuan Yao, Xiaojin Fu, Tao Zhong, Mingxuan Yuan \\
      Huawei Technologies\\ \texttt{liuzehua@connect.hku.hk}}

\begin{document}
\maketitle
\begin{abstract}
Asynchronous reinforcement learning can improve language-model post-training throughput by decoupling response generation from policy optimization, but stale responses introduce distribution drift. Standard behavior-corrected methods control this drift with behavior-policy probabilities, importance ratios, or clipping, which requires token-aligned, versioned, and numerically consistent behavior log-probabilities across rollout and learner systems. We ask whether asynchronous group-relative RL can instead be stabilized using only current-policy probabilities. We identify a scale-imbalance failure mode: when stale responses are evaluated under the current policy, positive and negative loss terms can appear at different negative log-probability scales, so zero-sum advantages no longer imply balanced loss contributions. We propose Asymmetric-Scale Policy Optimization (ASymPO), which normalizes each response's token loss by its current average token negative log-probability. ASymPO requires no behavior-policy probabilities, restores response-level zero-sum balance, and preserves a nonzero learning signal. We also introduce Scaled Policy Optimization (SPO), a fixed negative-scaling baseline, and evaluate both current-policy-only objectives in asynchronous mathematical reasoning post-training.
\end{abstract}

\section{Introduction} \label{sec:introduction}

Reinforcement learning (RL) is increasingly used to improve language models on reward-driven tasks, especially mathematical and reasoning problems where supervised fine-tuning does not directly optimize final success \citep{ouyang_2022_training,shao_2024_deepseekmath,guo_2025_deepseek}. To improve training throughput, modern systems often decouple response generation from policy optimization: rollout workers sample responses from recent policy snapshots while the learner updates a newer policy \citep{zeng_2026_glm}. This asynchronous design is efficient, but it creates distribution drift between the behavior policy that generated a response and the current policy that trains on it.

The standard way to address distribution drift is to use behavior-policy information. PPO-style methods compare $\pi_\theta$ with the behavior policy through importance ratios and clipping \citep{schulman_2017_proximal}, and related mechanisms are used in large-scale post-training systems \citep{guo_2025_deepseek,ma_2025_stabilizing,zeng_2026_glm}. These corrections are principled, but in asynchronous post-training pipelines they make the rollout--learner interface part of the correctness condition. First, each consumed response must carry behavior log-probabilities, or enough information to recompute them, and these quantities must remain aligned with the exact token sequence. Second, the response must be associated with the correct policy snapshot; otherwise the denominator of the importance ratio no longer corresponds to the behavior distribution that generated the data. Third, train--inference numerical drift and rollout staleness must be controlled, typically through recomputation, filtering, or scheduling rules, so that the resulting ratios remain meaningful \citep{team_2025_every,noukhovitch_2024_asynchronous,ma_2025_stabilizing,zeng_2026_glm}.

The naive current-policy objective is attractive for this reason, but it is not automatically stable. In group-relative RL, zero-sum advantages do not by themselves guarantee a balanced loss. In our asynchronous RL training pipeline, we observe that positive and negative advantage responses can differ substantially in their current log-probability scales. In particular, a stale negative-advantage response can have a much larger current negative log-probability scale than positive responses, causing the negative side of the loss to dominate and destabilize training.

We address this problem by scaling the current-policy loss directly, while keeping the training interface restricted to quantities already available on the learner side. The goal is to retain the infrastructure simplicity of current-policy-only training, but prevent a small set of stale negative responses from dominating the update.

We first introduce Scaled Policy Optimization (SPO), a simple method that reduces the influence of negative-advantage responses. Its motivation is direct: the instability is often driven by overly strong updates from stale negative samples, so weakening this side of the loss can make training more stable. SPO is simple and empirically useful, but its scaling rule is manually designed and does not adapt to the actual scale of each response.

We then propose Asymmetric-Scale Policy Optimization (ASymPO), which replaces this manual rule with an adaptive response-level normalization. ASymPO uses the current policy's own response scale to balance positive and negative loss contributions. Responses whose current loss scale is already large are automatically moderated, while the overall update still preserves the direction implied by the group-relative advantages. In this way, ASymPO targets the scale imbalance directly without introducing behavior-policy probabilities into the infrastructure.

This paper makes four contributions.
\textbf{First}, we identify a scale-imbalance failure mode in
current-policy-only asynchronous group-relative RL and show how
behavior correction avoids it through clipping.
\textbf{Second}, we formulate SPO and ASymPO as scaled current-policy
objectives, and prove that ASymPO exactly balances response-level
positive and negative loss contributions under zero-sum advantages.
\textbf{Third}, we show that ASymPO can be deployed with a
\emph{strictly more compact} rollout--learner interface than importance-ratio methods (Table~\ref{tab:interface}): it requires neither behavior
log-probability transport, logit recomputation at training precision,
nor policy-version bookkeeping.
\textbf{Fourth}, we empirically evaluate naive current-policy training, GPG, behavior-corrected GRPO, SPO, and ASymPO on asynchronous
mathematical-reasoning post-training across 3 model families.

\section{Related Works} \label{sec:related_works}

\paragraph{RLHF and preference optimization}
Reinforcement learning from human feedback has been widely used for language-model alignment and post-training \citep{christiano_2017_deep,ziegler_2019_fine,stiennon_2020_learning,bai_2022_training,ouyang_2022_training}. Related preference-optimization methods include direct or reference-free objectives that train from preference data without the full classical RLHF pipeline \citep{rafailov_2023_direct,ethayarajh_2023_kto,hong_2024_orpo}.

\paragraph{Reasoning-oriented post-training}
For mathematical reasoning and verifiable tasks, prior work has studied outcome supervision, process supervision, rejection fine-tuning, and verifier-based training \citep{cobbe_2021_training,uesato_2022_solving,lightman_2023_let,yuan_2023_scaling}. Recent open reasoning models and systems further use RL-style post-training and group-relative objectives for improving reasoning behavior \citep{shao_2024_deepseekmath,guo_2025_deepseek}.

\paragraph{Policy-gradient methods}
Policy-gradient methods provide the optimization basis for many RL post-training algorithms, including REINFORCE, actor-critic methods, TRPO, and PPO \citep{williams_1992_simple,sutton_1999_policy,schulman_2015_trust,schulman_2017_proximal}. Off-policy and lagged-policy training is commonly handled with importance sampling, clipped ratios, trust regions, or return-correction operators \citep{munos_2016_safe,wang_2016_sample,espeholt_2018_impala}.

\paragraph{Distributed and asynchronous reinforcement learning}
Distributed RL systems decouple experience generation and policy optimization through many actors, replay buffers, or actor-learner architectures \citep{nair_2015_massively,mnih_2016_asynchronous,horgan_2018_distributed,espeholt_2018_impala,kapturowski_2018_recurrent}. Related large-model training reports discuss PPO-style language-model RL, REINFORCE-style simplifications, asynchronous rollout generation, training-inference discrepancy, and routing replay \citep{zheng_2023_secrets,ahmadian_2024_back,team_2025_every,ma_2025_stabilizing,zeng_2026_glm,noukhovitch_2024_asynchronous}.

\paragraph{Negative samples in post-training}
Several post-training methods simplify or modify the treatment of preference labels, unsuccessful responses, or negative samples \citep{yuan_2023_scaling,rafailov_2023_direct,ethayarajh_2023_kto,hong_2024_orpo,ahmadian_2024_back,liu_2026_rift}. These works are related to broader efforts to make language-model post-training less dependent on dense value estimation or heavy RL-specific infrastructure.

\section{Instability in Asynchronous RL} \label{sec:instability}

This section formalizes the instability that ASymPO addresses. In standard policy-gradient objectives, distinguishing the current policy $\pi_\theta$ from the behavior policy $\pi_b$ serves a structural role: all responses in a group share the reference scale of $\pi_b$, and clipping bounds the per-token ratio $\rho(\theta)=\pi_\theta/\pi_b$ to prevent the current-policy log-probability scale from drifting far from that reference. Together, the shared reference and the clipping bound balance the loss contributions of positive and negative advantage responses. In asynchronous training that omits $\pi_b$ and optimizes a current-policy-only loss, this scale control is absent. Positive and negative advantage responses can then enter the loss at inconsistent current-policy scales, breaking the cancellation that the zero-sum advantage baseline is designed to provide and causing training instability.

\subsection{Behavior-Corrected Balance}

For a prompt $x\sim\mathcal{D}$, let the behavior policy $\pi_b$ sample a group of responses $\{y_g\}_{g=1}^G$, where $y_g=(a_{g,1},\ldots,a_{g,m_g})$. A reward function assigns $r_g=r(x,y_g)$, and the group-relative advantage is
\begin{equation}
A_g=r_g-\hat{r}, \quad \hat{r}=\frac{1}{G}\sum_{j=1}^G r_j.
\end{equation}
Thus $\sum_{g=1}^G A_g=0$. This zero-sum property is the intended stabilizing signal: positive and negative samples should contribute comparable policy-loss mass in opposite directions.

For a policy $\pi_\mu$, define the average token negative log-probability of response $y_g$ as
\begin{equation}
S_{\mu,g}=-\frac{1}{m_g}\sum_{i=1}^{m_g}\log \pi_\mu(a_{g,i}\mid x,a_{g,<i}).
\end{equation}
The response-level log-probability loss balance is measured by
\begin{equation}
\Delta_\theta=\sum_{g=1}^G A_g S_{\theta,g}.
\end{equation}
If $\Delta_\theta\approx0$, then the positive and negative advantage samples have comparable total loss contributions, not merely comparable raw advantages.

Traditional behavior-corrected methods compare $\pi_\theta$ against $\pi_b$. PPO-style objectives use token-level ratios and clipped ratios
\begin{equation}
\begin{split}
\rho_{g,i}(\theta)&=\frac{\pi_\theta(a_{g,i}\mid x,a_{g,<i})}{\pi_b(a_{g,i}\mid x,a_{g,<i})},\\
\bar{\rho}_{g,i}(\theta)&=\operatorname{clip}\left(\rho_{g,i}(\theta),1-\epsilon,1+\epsilon\right).
\end{split}
\end{equation}
A response-level clipped loss can be written as
\begin{equation}
\begin{split}
\mathcal{L}_{\mathrm{clip}}(\theta)=-\frac{1}{G}\sum_{g=1}^G\frac{1}{m_g}\sum_{i=1}^{m_g}\min\big(&A_g\rho_{g,i}(\theta),\\
&A_g\bar{\rho}_{g,i}(\theta)\big).
\end{split}
\end{equation}
The role of clipping is not only to bound the policy ratio, but also to preserve the balance of the loss contributions induced by normalized advantages. The following theorem makes this statement explicit.

\begin{theorem}[Scale balance under behavior correction] \label{thm:behavior_corrected_balance}
Assume $\sum_{g=1}^G A_g=0$, and let $\mathcal{P}=\{g:A_g>0\}$ and $\mathcal{N}=\{g:A_g<0\}$. Define $B=\sum_{g\in\mathcal{P}}A_g=\sum_{g\in\mathcal{N}}|A_g|$. Suppose the responses are $\delta_b$-balanced under the behavior policy, meaning that there exists $\bar{S}_b$ such that
\begin{equation}
|S_{b,g}-\bar{S}_b|\leq \delta_b \quad \text{for all } g.
\end{equation}
Suppose also that clipping keeps every token ratio used by the update within the trusted range:
\begin{equation}
1-\epsilon \leq \rho_{g,i}(\theta)\leq 1+\epsilon \quad \text{for all } g,i,
\end{equation}
where $0<\epsilon<1$. Let $\tau_\epsilon=\max\{\log(1+\epsilon),-\log(1-\epsilon)\}$. Then
\begin{equation}
\left|\Delta_\theta\right|=\left|\sum_{g=1}^G A_g S_{\theta,g}\right|\leq 2B(\delta_b+\tau_\epsilon).
\end{equation}
Equivalently, the normalized loss $\frac{1}{G}\Delta_\theta$ is close to zero whenever the behavior-policy scale dispersion $\delta_b$ and the clipping width $\epsilon$ are small.
\end{theorem}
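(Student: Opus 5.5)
The argument splits $S_{\theta,g}$ into a shared constant plus a small per-response deviation. The constant is annihilated by the zero-sum advantages. The deviation is controlled by combining the behavior-policy dispersion bound with the clipping bound on token ratios.

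\textbf{Per-response bound.} Fix a response $g$. By definition,
\[
S_{\theta,g}-S_{b,g}=-\frac{1}{m_g}\sum_{i=1}^{m_g}\log\rho_{g,i}(\theta).
\]
The clipping assumption gives $\log(1-\epsilon)\le\log\rho_{g,i}(\theta)\le\log(1+\epsilon)$. Since $0<\epsilon<1$, both endpoints are finite, and this yields $|\log\rho_{g,i}(\theta)|\le\tau_\epsilon$. Averaging over tokens gives $|S_{\theta,g}-S_{b,g}|\le\tau_\epsilon$. Combining this with $\delta_b$-balance through the triangle inequality gives
\[
|S_{\theta,g}-\bar S_b|\le\delta_b+\tau_\epsilon\quad\text{for all } g.
\]

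\textbf{Removing the constant.} Because $\sum_g A_g=0$, we have $\Delta_\theta=\sum_g A_g(S_{\theta,g}-\bar S_b)$. Applying the triangle inequality,
\[
|\Delta_\theta|\le\sum_g|A_g|\,(\delta_b+\tau_\epsilon).
\]
Responses with $A_g=0$ contribute nothing. The remaining responses split into $\mathcal P$ and $\mathcal N$, so $\sum_g|A_g|=\sum_{\mathcal P}A_g+\sum_{\mathcal N}|A_g|=2B$. This gives the stated bound $|\Delta_\theta|\le 2B(\delta_b+\tau_\epsilon)$.

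\textbf{Sharper constant and main subtlety.} One could instead write $\Delta_\theta=\sum_{\mathcal P}A_g S_{\theta,g}-\sum_{\mathcal N}|A_g|S_{\theta,g}$. This is the difference of two $B$-weighted averages of numbers lying in an interval of width $2(\delta_b+\tau_\epsilon)$, so the same bound follows. In fact this route gives the sharper $B\cdot 2(\delta_b+\tau_\epsilon)$ only after care, so either route suffices. The only point needing attention is interpretive: the hypothesis must be read as every token ratio lying in the trusted range, not as the clipped ratio $\bar\rho$ doing so. The theorem states this hypothesis explicitly, so no argument about the $\min$ in $\mathcal L_{\mathrm{clip}}$ is needed. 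The statement about $\frac1G\Delta_\theta$ then follows by dividing by $G$.
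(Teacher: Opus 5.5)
Your proof is correct and follows essentially the same route as the paper's: the clipping bound gives $|S_{\theta,g}-S_{b,g}|\le\tau_\epsilon$, the zero-sum advantages let you subtract $\bar S_b$, and the triangle inequality with $\sum_g|A_g|=2B$ finishes it. One small correction: your ``sharper constant'' aside is not sharper, since the difference-of-weighted-averages route gives exactly the same bound $2B(\delta_b+\tau_\epsilon)$, so you can drop that remark.
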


Theorem~\ref{thm:behavior_corrected_balance} states that behavior correction transfers balance from advantages to loss contributions. The group baseline gives $\sum_g A_g=0$. The behavior policy supplies a common reference scale for sampled responses through $S_{b,g}$. Clipping then prevents $S_{\theta,g}$ from moving far from that reference. As a result, the positive sum $\sum_{g\in\mathcal{P}}A_gS_{\theta,g}$ and the negative sum $\sum_{g\in\mathcal{N}}|A_g|S_{\theta,g}$ remain close, so the policy loss is not dominated by one side of the group.

\subsection{Naive Loss Without Behavior Policy}

In many asynchronous implementations, storing or recomputing $\pi_b$ is expensive, so the learner may use a naive current-policy-only loss:
\begin{equation} \label{equ:naive_loss}
\mathcal{L}_{\mathrm{naive}}(\theta)=\frac{1}{G}\sum_{g=1}^G A_g S_{\theta,g}.
\end{equation}
Let $\mathcal{P}=\{g:A_g>0\}$ and $\mathcal{N}=\{g:A_g<0\}$. Since $\sum_g A_g=0$, we have $\sum_{g\in\mathcal{P}}A_g=\sum_{g\in\mathcal{N}}|A_g|$, but the loss decomposes as
\begin{equation}
\begin{split}
\mathcal{L}_{\mathrm{naive}}(\theta)=\frac{1}{G}\bigg(&\sum_{g\in\mathcal{P}} A_g S_{\theta,g}\\
&-\sum_{g\in\mathcal{N}} |A_g| S_{\theta,g}\bigg).
\end{split}
\end{equation}
This loss drops the behavior-policy ratios. The behavior-policy scale may still be balanced, because the responses were sampled from $\pi_b$, but the condition that connects $\pi_b$ to $\pi_\theta$ is now absent. Writing
\begin{equation}
d_g=S_{\theta,g}-S_{b,g}=-\frac{1}{m_g}\sum_{i=1}^{m_g}\log \rho_{g,i}(\theta),
\end{equation}
we obtain
\begin{equation}
\sum_{g=1}^G A_g S_{\theta,g}=\sum_{g=1}^G A_g S_{b,g}+\sum_{g=1}^G A_g d_g.
\end{equation}
The first term is small under the behavior-scale assumption in Theorem~\ref{thm:behavior_corrected_balance}. The second term is uncontrolled without clipping or behavior-policy correction. If stale negative-advantage responses have much larger current negative log-probability than they had under $\pi_b$, then $d_g$ is large for $g\in\mathcal{N}$, and the negative part of the loss can dominate:
\begin{equation}
\sum_{g\in\mathcal{N}} |A_g| S_{\theta,g} \gg \sum_{g\in\mathcal{P}} A_g S_{\theta,g}.
\end{equation}
Thus the naive loss can break the positive-negative cancellation even when the raw advantages sum to zero and the sampled responses are well scaled under $\pi_b$. The learner may then keep suppressing stale negative samples while receiving too little compensating signal from positive samples, which can destabilize training and lead to collapse. This is why a current-policy-only method still needs an explicit mechanism for balancing positive and negative loss contributions. ASymPO provides such a mechanism by normalizing each response's current negative log-probability scale, improving stability without requiring access to $\pi_b$.

\section{Methodology} \label{sec:methodology}

Section~\ref{sec:instability} shows that a current-policy-only objective can become unstable when positive and negative advantage responses enter the loss at different current-policy scales. We therefore study a general scaled log-probability objective. For a prompt $x$ and response $y_g=(a_{g,1},\ldots,a_{g,m_g})$, let
\begin{equation}
\begin{split}
p_{g,i}(\theta)&=\pi_\theta(a_{g,i}\mid x,a_{g,<i}),\\
S_{\theta,g}&=-\frac{1}{m_g}\sum_{i=1}^{m_g}\log p_{g,i}(\theta).
\end{split}
\end{equation}
Given group-relative advantages $\{A_g\}_{g=1}^G$, we consider objectives of the form
\begin{equation} \label{eq:scaled_policy_loss}
\begin{split}
\mathcal{L}_{C}(\theta)&=-\frac{1}{G}\sum_{g=1}^G\frac{1}{m_g}\sum_{i=1}^{m_g}A_g C_g\log p_{g,i}(\theta)\\
&=\frac{1}{G}\sum_{g=1}^G A_g C_g S_{\theta,g}.
\end{split}
\end{equation}
The central design question is how to choose the coefficient $C_g$ using only current-policy probabilities. The coefficient should reduce the harmful dominance of stale negative samples while preserving the useful learning signal from positive samples.

\subsection{Scaled Policy Optimization}

Scaled Policy Optimization (SPO) is the simplest instance of Eq.~\eqref{eq:scaled_policy_loss}. Its motivation is direct: if negative-advantage samples can produce disproportionately harmful updates, then their contribution should be reduced by a fixed coefficient. SPO sets
\begin{equation}
C_g^{\mathrm{SPO}}=
\begin{cases}
1, & A_g\geq 0,\\
\alpha, & A_g<0,
\end{cases}
\end{equation}
where $\alpha\in(0,1)$ is a hyperparameter. Equivalently, SPO uses the reweighted advantage $\tilde{A}_g=A_g$ for $A_g\geq0$ and $\tilde{A}_g=\alpha A_g$ for $A_g<0$, giving
\begin{equation}
\mathcal{L}_{\mathrm{SPO}}(\theta)=-\frac{1}{G}\sum_{g=1}^G\frac{1}{m_g}\sum_{i=1}^{m_g}\tilde{A}_g\log p_{g,i}(\theta).
\end{equation}
This design keeps the direction of every update unchanged: positive responses are still reinforced, and negative responses are still suppressed. The only change is that negative responses are suppressed less aggressively. In our experiments, this fixed scaling substantially improves stability, which supports the diagnosis that unbalanced negative loss is a major failure mode.


A key limitation of SPO is its reliance on a manually chosen $\alpha$. A fixed coefficient cannot distinguish mildly stale negatives from those with vanishing current-policy probabilities, nor does it account for the response-dependent scale variations that drive the instability discussed in Section~\ref{sec:instability}. This motivates an adaptive coefficient that dynamically balances loss contributions based on each response’s current scale.

\subsection{Asymmetric-Scale Policy Optimization}

Asymmetric-Scale Policy Optimization (AsymPO) chooses $C_g$ from the current response scale rather than from a fixed sign-dependent rule. Let $\operatorname{sg}(\cdot)$ denote the stop-gradient operator: it has the same forward value as its argument but is treated as a constant during back-propagation. AsymPO sets
\begin{equation}
C_g^{\mathrm{AsymPO}}=\frac{1}{\operatorname{sg}(S_{\theta,g})}
\end{equation}
and optimizes
\begin{equation} \label{eq:sbpo_loss}
\mathcal{L}_{\mathrm{AsymPO}}(\theta)=\frac{1}{G}\sum_{g=1}^G A_g\frac{S_{\theta,g}}{\operatorname{sg}(S_{\theta,g})}.
\end{equation}
Thus every token loss in a response is divided by that response's own average token negative log-probability. Responses with large current negative log-probability receive smaller coefficients, while responses with small current negative log-probability receive larger coefficients. Unlike SPO, this scaling is not selected by a manually tuned sign rule; it is determined by the current policy's response-level scale.

\begin{theorem}[Exact response-level loss balance of AsymPO] \label{thm:sbpo_balance}
Assume $S_{\theta,g}>0$ for all $g$ and $\sum_{g=1}^G A_g=0$. Let $\mathcal{P}=\{g:A_g>0\}$ and $\mathcal{N}=\{g:A_g<0\}$. Under the AsymPO coefficient $C_g^{\mathrm{AsymPO}}=1/\operatorname{sg}(S_{\theta,g})$, the forward response-level loss contributions satisfy
\begin{equation}
\sum_{g\in\mathcal{P}} A_g\frac{S_{\theta,g}}{\operatorname{sg}(S_{\theta,g})}=\sum_{g\in\mathcal{P}}A_g
\end{equation}
and
\begin{equation}
\sum_{g\in\mathcal{N}} |A_g|\frac{S_{\theta,g}}{\operatorname{sg}(S_{\theta,g})}=\sum_{g\in\mathcal{N}}|A_g|.
\end{equation}
Consequently, the signed AsymPO loss is exactly balanced:
\begin{equation}
\sum_{g=1}^G A_g\frac{S_{\theta,g}}{\operatorname{sg}(S_{\theta,g})}=0.
\end{equation}
Moreover, the back-propagated gradient is
\begin{equation}
\nabla_\theta \mathcal{L}_{\mathrm{AsymPO}}(\theta)=\frac{1}{G}\sum_{g=1}^G\frac{A_g}{\operatorname{sg}(S_{\theta,g})}\nabla_\theta S_{\theta,g},
\end{equation}
so each response update is normalized by its own current scale.
\end{theorem}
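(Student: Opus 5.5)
The argument separates the forward value of the loss from its back-propagated gradient, and handles each using only the defining property of the stop-gradient operator. That operator returns the same forward value as its argument, and it contributes zero derivative during back-propagation.

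\textbf{Forward balance.} I would begin with the forward pass. Since $S_{\theta,g}>0$ is assumed, $\operatorname{sg}(S_{\theta,g})$ has the same positive value as $S_{\theta,g}$. The ratio $S_{\theta,g}/\operatorname{sg}(S_{\theta,g})$ is therefore well defined and equals $1$ in value for every $g$. Substituting this termwise into the sums over $\mathcal{P}$ and over $\mathcal{N}$ gives the two displayed identities directly. For the signed statement, I would split the full sum into the parts over $\mathcal{P}$, $\mathcal{N}$, and the indices with $A_g=0$. The last part contributes nothing, so the signed sum equals $\sum_{g\in\mathcal{P}}A_g-\sum_{g\in\mathcal{N}}|A_g|=\sum_{g}A_g=0$ by the zero-sum hypothesis. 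This is the same quantity $B$ appearing in Theorem~\ref{thm:behavior_corrected_balance}, now matched exactly rather than up to $2B(\delta_b+\tau_\epsilon)$.

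\textbf{Gradient formula.} Next I would differentiate Eq.~\eqref{eq:sbpo_loss} term by term, using linearity of $\nabla_\theta$ over the finite sum. In each term, $A_g$ is a constant independent of $\theta$, and so is $1/\operatorname{sg}(S_{\theta,g})$ under back-propagation. The product rule then leaves only $\frac{A_g}{\operatorname{sg}(S_{\theta,g})}\nabla_\theta S_{\theta,g}$, and multiplying by $1/G$ gives the stated expression. I would add that $\nabla_\theta S_{\theta,g}=-\frac{1}{m_g}\sum_i\nabla_\theta\log p_{g,i}(\theta)$. This shows that each response's token-level score-function update is rescaled by the inverse of its own current scale, which is the claimed normalization.

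\textbf{Main obstacle.} There is no real mathematical difficulty here. The one delicate point is conceptual: the forward value of the loss is identically zero, yet the gradient is nonzero. I would make this explicit by treating $\operatorname{sg}$ formally, that is, as evaluating the loss at a frozen copy $\theta'$ of the parameters and then setting $\theta'=\theta$ after differentiating. This avoids the mistaken conclusion that a constant loss has zero gradient.
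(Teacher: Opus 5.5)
Your proposal is correct and follows the paper's proof: both use that $S_{\theta,g}/\operatorname{sg}(S_{\theta,g})=1$ in the forward pass to get the two unsigned identities, and then the zero-sum hypothesis to get the signed balance. Both also obtain the gradient by treating $\operatorname{sg}(S_{\theta,g})$ as a constant and differentiating termwise; your explicit handling of indices with $A_g=0$ and your frozen-copy formalization of $\operatorname{sg}$ are small clarifications the paper leaves implicit.
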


Theorem~\ref{thm:sbpo_balance} gives the desired property directly. The group baseline balances the raw advantages, and AsymPO makes the response-level loss inherit this balance by normalizing away the current-policy scale $S_{\theta,g}$. At the same time, the stop-gradient normalization preserves a nonzero learning signal: the optimizer still increases probabilities for positive-advantage responses and decreases probabilities for negative-advantage responses, but the magnitude of each response's update is measured relative to its own current scale.

AsymPO therefore addresses the weakness of SPO. SPO reduces negative samples by a fixed human-designed factor $\alpha$, which is effective but not adaptive to the actual source of instability. AsymPO instead balances positive and negative loss contributions automatically, using only the current policy probabilities already required by the naive objective. It does not store, transmit, or recompute behavior-policy probabilities, making it suitable for asynchronous training while simplifying the surrounding infrastructure and directly targeting the scale imbalance identified in Section~\ref{sec:instability}.

\begin{table}[t]
\centering
\fontsize{8}{9}\selectfont
\begin{tabular}{lcc}
\toprule
\textbf{Quantity sent rollout$\to$learner} & \textbf{GRPO} & \textbf{ASymPO} \\
\midrule
Sampled tokens $\{a_{g,i}\}$              & \checkmark & \checkmark \\
Scalar reward $r_g$                       & \checkmark & \checkmark \\
Per-token behavior log-prob $\log\pi_b$   & \checkmark & --- \\
Policy-version tag                        & \checkmark & --- \\
Inference-precision logit recomputation   & required   & --- \\
\bottomrule
\end{tabular}
\caption{What must flow through the rollout--learner interface.
ASymPO removes the per-token behavior log-probability channel and the
policy-version metadata it depends on, eliminating both lossy transport
and train--inference numerical drift.}
\label{tab:interface}
\vspace{-1em}
\end{table}

Table~\ref{tab:interface} makes this concrete by listing the
quantities each method must transport through the rollout--learner
interface, directly addressing the four infrastructure costs
identified in \S\ref{sec:introduction}.

\section{Experiments} \label{sec:experiments}

We evaluate current-policy-only training objectives for mathematical reasoning tasks, where rewards are verifiable and group-relative RL has been widely adopted. The experiments reported in this section focus on the SPO and AsymPO. Our experiments are designed to answer the following research questions:
\begin{itemize}
\item At what accuracy cost, if any, does removing the behavior-log-prob channel from the rollout--learner interface come? 
\item Does the automatically balanced method AsymPO perform better than the artificially scaled method SPO?
\item Compared with other RL algorithms that avoid importance sampling, such as GPG \citep{chu_2025_gpg}, do SPO and AsymPO improve performance or stability in asynchronous training?
\end{itemize}

\subsection{Setup}

\paragraph{Models and Datasets} We conduct experiments on Qwen3-1.7B-Base, Qwen3-4B-Base \citep{yang_2025_qwen3}, and LLaMA-3.2-3B-Instruct to assess mathematical reasoning performance across model families. We train on a randomly sampled subset of $4$k problems from the MATH training set \citep{hendrycks_2021_math}.

\paragraph{Methods} For the baseline, we use naive loss objective \eqref{equ:naive_loss}. We also compare with GRPO, which is a recent importance-sampling-based method that uses behavior-policy probabilities \citep{guo_2025_deepseek}. For current-policy-only methods, we evaluate SPO and AsymPO as proposed in Section~\ref{sec:methodology}. We also include GPG \citep{chu_2025_gpg}, a recent method that avoids importance sampling by using a value-function baseline instead of a group baseline. GPG does not use behavior-policy probabilities, but it also does not have the zero-sum advantage balance that motivates our analysis and methods. Comparing with GPG allows us to test whether the specific scale-balancing mechanism of AsymPO offers benefits beyond the general idea of avoiding importance sampling.

\paragraph{Implementation Details} We implement all RL training with VeRL \citep{sheng_2024_hybridflow}. For asynchronous training, we follow the VeRL configuration and set ppo mini batch size to $32$, train batch size to $512$, and staleness threshold to $0.5$. For each training prompt, we sample $8$ rollouts to compute group-relative advantages. 
For ASymPO and SPO, we modify the VeRL rollout--learner interface to \emph{not} transmit per-token behavior log-probabilities: the learner receives only sampled tokens and scalar rewards, and recomputes every log-probability under $\pi_\theta$. This mirrors the interface in Table~\ref{tab:interface} and makes the implementation faithful to the simplification claim of \S\ref{sec:introduction}; the same VeRL deployment is used for GRPO baselines with the standard behavior-log-prob channel enabled. 
For each problem, the maximum prompt length is $1024$ tokens and the maximum response length is $4096$ tokens. We use a learning rate of $2 \times 10^{-6}$ and train for an equivalent of $3$ epochs for all models. For the SPO, the negative coefficient is set to $\alpha=0.2$, inspired by the work \citep{liu_2026_rift}. Unless otherwise specified, the remaining hyperparameters of each RL algorithm follow the default values recommended in the VeRL documentation.

\paragraph{Evaluation Benchmarks} Following prior work on mathematical reasoning, we evaluate on AIME24 \citep{aime24}, AIME25 \citep{aime25}, MATH500 \citep{lightman_2023_let}, AMC23, GSM8K \citep{cobbe_2021_gsm8k}, and Minerva-Math \citep{lewkowycz_2022_solving}. We use Evalscope \citep{modelscope_2024_evalscope} as the evaluation framework. For each problem, we sample $8$ rollouts. Accuracy is computed by comparing model responses with ground-truth answers, and we report mean@8 and pass@8 performances.

\subsection{Results}

Figure~\ref{fig:training_rewards} shows the training reward curves on Qwen3-1.7B-Base as a representative example, Table~\ref{tab:main_mean8} reports mean@8 accuracy, and Table~\ref{tab:main_pass8} reports pass@8 accuracy. The tables summarize the final benchmark results across the three model families. For all three models, the naive loss and GPG collapse during training, leaving no meaningful final checkpoint for benchmark evaluation.

\begin{figure}[t]
\centering
\includegraphics[width=\linewidth]{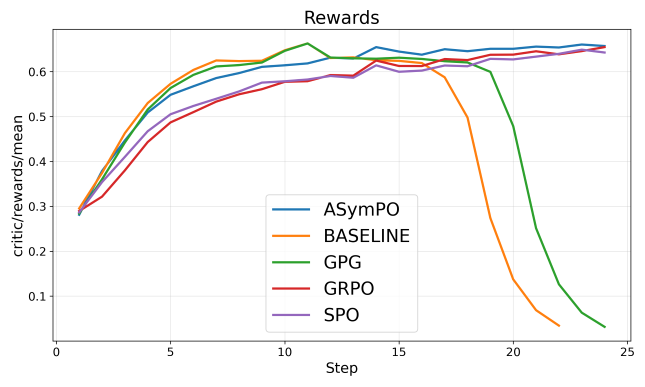}
\caption{Training reward curves for asynchronous RL on Qwen3-1.7B-Base, shown as a representative example. The baseline denotes the naive current-policy loss. The baseline and GPG curves collapse late in training, while GRPO, SPO, and AsymPO maintain stable rewards. The same collapse behavior for GPG and the naive loss was also observed on LLaMA-3.2-3B-Instruct and Qwen3-4B-Base.}
\label{fig:training_rewards}
\vspace{-1em}
\end{figure}

\begin{table*}[t]
\centering
\renewcommand{\arraystretch}{1.12}
\setlength{\dashlinedash}{0.6pt}
\setlength{\dashlinegap}{2pt}
\setlength{\tabcolsep}{3.6pt}
\begin{tabular}{@{}lrrrrrrr@{}}
\toprule
Method & AIME24 & AIME25 & MATH500 & AMC23 & GSM8K & Minerva & Avg. \\
\midrule
\multicolumn{8}{c}{\textit{Qwen3-1.7B-Base}} \\
\hdashline
Naive Loss & \multicolumn{7}{c}{Collapsed during training} \\
GPG & \multicolumn{7}{c}{Collapsed during training} \\
GRPO & 6.67 & \textbf{4.58} & 65.87 & \textbf{37.50} & 81.79 & \textbf{28.26} & \textbf{37.45} \\
SPO & 6.25 & 4.16 & 64.85 & 33.70 & \textbf{83.14} & 26.33 & 36.41 \\
AsymPO & \textbf{7.50} & 4.16 & \textbf{66.20} & 33.70 & 82.87 & 27.44 & 36.98 \\
\midrule
\multicolumn{8}{c}{\textit{LLaMA-3.2-3B-Instruct}} \\
\hdashline
Naive Loss & \multicolumn{7}{c}{Collapsed during training} \\
GPG & \multicolumn{7}{c}{Collapsed during training} \\
GRPO & 10.00 & \textbf{3.33} & 49.37 & 23.37 & \textbf{80.88} & \textbf{18.08} & 30.84 \\
SPO & 8.33 & \textbf{3.33} & 48.53 & 20.92 & 79.56 & 16.96 & 29.61 \\
AsymPO & \textbf{15.42} & \textbf{3.33} & \textbf{49.95} & \textbf{23.91} & 79.48 & \textbf{18.08} & \textbf{31.70} \\
\midrule
\multicolumn{8}{c}{\textit{Qwen3-4B-Base}} \\
\hdashline
Naive Loss & \multicolumn{7}{c}{Collapsed during training} \\
GPG & \multicolumn{7}{c}{Collapsed during training} \\
GRPO & \textbf{13.33} & \textbf{11.25} & \textbf{78.72} & 44.84 & 90.63 & 34.33 & \textbf{45.52} \\
SPO & 12.08 & 8.75 & 77.45 & \textbf{48.37} & 91.09 & 34.93 & 45.45 \\
AsymPO & 12.50 & 9.16 & 77.53 & 44.57 & \textbf{91.15} & \textbf{35.48} & 45.07 \\
\bottomrule
\end{tabular}
\caption{Mean@8 accuracy on mathematical reasoning benchmarks using $8$ evaluation rollouts per problem. The average is computed over all six listed benchmarks. The best value within each model block among evaluated methods with final checkpoints is bolded. ``Collapsed during training'' indicates that the corresponding model-method training run collapsed, leaving no meaningful final checkpoint for evaluation; Figure~\ref{fig:training_rewards} shows the Qwen3-1.7B-Base case as a representative example.}
\label{tab:main_mean8}
\vspace{-1em}
\end{table*}

The training-reward curves in Figure~\ref{fig:training_rewards} provide a direct view of optimization stability on Qwen3-1.7B-Base. The baseline, corresponding to the naive current-policy loss, and GPG both reach high reward early but collapse late in training, with the reward dropping to a near-zero regime and no meaningful final checkpoint for evaluation. In contrast, GRPO, SPO, and AsymPO maintain stable reward trajectories throughout training. The same collapse pattern for the naive loss and GPG was also observed on LLaMA-3.2-3B-Instruct and Qwen3-4B-Base. These curves show that the final benchmark scores in Tables~\ref{tab:main_mean8} and~\ref{tab:main_pass8} should be interpreted together with training stability: current-policy-only training is not stable by default, but SPO and AsymPO prevent the collapse observed in the naive objective and in GPG across the evaluated model families.

\paragraph{RQ1: Cost of dropping behavior-policy probabilities.}
The results show that current-policy-only objectives can be competitive with GRPO when they include explicit scale control, although the relative ranking depends on the model and metric. On Qwen3-1.7B-Base, GRPO obtains the best average score among completed methods, with $37.45$ mean@8 and $54.56$ pass@8. However, AsymPO is close in mean@8, reaching $36.98$, and SPO is close in pass@8, reaching $53.92$. AsymPO also obtains the best mean@8 scores on AIME24 and MATH500, while SPO obtains the best mean@8 score on GSM8K. On LLaMA-3.2-3B-Instruct, AsymPO surpasses GRPO on both aggregate metrics, improving the mean@8 average from $30.84$ to $31.70$ and the pass@8 average from $46.09$ to $49.19$. On Qwen3-4B-Base, GRPO is strongest on the aggregate metrics, with $45.52$ mean@8 and $60.90$ pass@8, but SPO and AsymPO remain close in mean@8, reaching $45.45$ and $45.07$, respectively. AsymPO also obtains the best Qwen3-4B-Base mean@8 scores on GSM8K and Minerva-Math, and ties GRPO on AIME24 pass@8. Overall, behavior-policy probabilities are not strictly necessary for competitive final performance in this setting, but the current-policy-only methods do not uniformly dominate GRPO.

\paragraph{RQ2: AsymPO versus SPO} AsymPO generally improves over the fixed scaling used by SPO on some models, but the advantage is not uniform. On Qwen3-1.7B-Base, AsymPO improves the mean@8 average from $36.41$ to $36.98$, with gains on AIME24, MATH500, and Minerva-Math, but SPO has a slightly higher pass@8 average, $53.92$ versus $53.17$. On LLaMA-3.2-3B-Instruct, the advantage of AsymPO is clearer: AsymPO improves the mean@8 average from $29.61$ to $31.70$ and the pass@8 average from $45.25$ to $49.19$. It also gives the best pass@8 scores on AIME24, MATH500, and AMC23. On Qwen3-4B-Base, the comparison is mixed: SPO has a slightly higher mean@8 average, $45.45$ versus $45.07$, while AsymPO has a higher pass@8 average, $58.87$ versus $58.06$. AsymPO performs best among the current-policy-only methods on Qwen3-4B-Base mean@8 for AIME24, AIME25, MATH500, GSM8K, and Minerva-Math, whereas SPO is stronger on AMC23 mean@8. For Qwen3-4B-Base pass@8, AsymPO's higher average is driven by AIME24 and AIME25, while SPO is stronger on MATH500, AMC23, GSM8K, and Minerva-Math. These results support the main motivation for AsymPO as an adaptive alternative to a manually chosen negative coefficient, while also showing that the fixed coefficient in SPO remains competitive in some regimes.

\paragraph{RQ3: Stability compared with other current-policy-only methods} The comparison with GPG isolates whether avoiding importance sampling is sufficient by itself. GPG does not require behavior-policy probabilities, but it collapses during training in our asynchronous setting, just as the naive current-policy loss does, across all three evaluated model families. SPO and AsymPO also avoid behavior-policy probabilities, yet both maintain stable training rewards and produce evaluable final checkpoints, including on Qwen3-4B-Base. This difference indicates that the key property is not merely removing importance sampling, but preserving the positive-negative balance of group-relative updates under the current policy. The empirical stability of SPO and AsymPO, together with the collapse of GPG and the naive loss, supports the scale-imbalance analysis in Section~\ref{sec:instability} and the response-level balancing mechanism introduced in Section~\ref{sec:methodology}.

\subsection{Ablation Studies}

ASymPO does not introduce any additional tuning hyperparameter beyond the standard training configuration. Its response-level coefficient is determined directly by the current-policy scale $S_{\theta,g}$ through Eq.~\eqref{eq:sbpo_loss}. Therefore, there is no method-specific hyperparameter ablation for ASymPO.

SPO, in contrast, contains the manually chosen negative coefficient $\alpha$. Following RIFT \citep{liu_2026_rift}, we set $\alpha=0.2$ in all experiments. This coefficient has a substantial effect on the resulting objective. When $\alpha=1$, SPO reduces to the naive current-policy loss because positive- and negative-advantage responses are weighted identically; as shown above, this objective collapses during training in our setting. When $\alpha=0$, negative-advantage responses are removed from the objective, making the update similar to RFT-style positive-only fine-tuning, which gives weak empirical performance because it does not explicitly suppress low-reward responses. These two endpoints illustrate that SPO is sensitive to the choice of $\alpha$, and that selecting an appropriate value is important for stable and effective training.

\section{Conclusions}

This work studied current-policy-only optimization for asynchronous RL, where behavior-policy probabilities can correct distribution drift but substantially complicate the rollout--learner interface. We identified a loss-scale imbalance that arises when stale responses are evaluated under the current policy: although group-relative advantages sum to zero, positive and negative responses can have different current negative log-probability scales, causing one side of the policy loss to dominate. To address this failure mode without behavior-policy probabilities, we introduced two current-policy-only objectives: SPO, which reduces the influence of negative-advantage responses with a fixed coefficient, and ASymPO, which adaptively normalizes each response by its own current average token negative log-probability. ASymPO balances response-level positive and negative loss contributions under zero-sum advantages while retaining a meaningful policy-gradient update, providing a simple mechanism for stabilizing asynchronous group-relative RL when storing or recomputing behavior-policy information is undesirable.

\section*{Limitations}

This work focuses on a specific failure mode of current-policy-only asynchronous RL: the imbalance caused by response-level current-policy loss scales. ASymPO is designed to correct this imbalance, but it does not provide a general solution to all forms of distribution drift. In particular, unlike behavior-corrected objectives that explicitly compare $\pi_\theta$ with the behavior policy, ASymPO does not bound the policy ratio between generation and training. When the behavior policy is very far from the current policy, current-policy probabilities alone may not contain enough information to fully characterize the off-policy mismatch.

The proposed methods also make simplifying design choices. SPO relies on a manually chosen negative scaling coefficient, so its performance can depend on the task, model, reward distribution, and degree of staleness. ASymPO removes this fixed coefficient by normalizing each response with its own current loss scale, but this normalization is still a heuristic transformation of the policy objective rather than a complete trust-region method. Its behavior may depend on how advantages are normalized, how response lengths vary, and how extreme current-policy probabilities appear during training.

Our empirical scope is limited to mathematical reasoning tasks with verifiable rewards and group-relative advantages. This setting is important, but it does not cover broader RLHF settings with learned reward models, multi-turn interaction, tool use, or long-horizon agent tasks. Additional evaluation is needed to determine whether the same scale-imbalance diagnosis and the same normalization strategy remain effective across larger models, more diverse data mixtures, different rollout staleness regimes, and production-scale asynchronous training systems.

Future work should study current-policy-only optimization under stronger theoretical and systems perspectives. On the theoretical side, it would be useful to characterize when response-scale normalization approximates behavior correction and when explicit behavior-policy information is unavoidable. On the algorithmic side, promising directions include adaptive variants that combine scale balancing with lightweight drift diagnostics, token-level or response-level safeguards for extreme probabilities, and extensions beyond group-relative advantages. On the systems side, a more complete evaluation should measure not only task accuracy, but also implementation complexity, failure modes, and robustness under realistic asynchronous rollout and training pipelines.


\bibliography{custom}

\cleardoublepage
\appendix

\section{Detailed Proofs} \label{sec:appendix_1}

\begin{proof}[\textbf{Proof of Theorem~\ref{thm:behavior_corrected_balance}}]
For each response,
\begin{equation}
S_{\theta,g}=S_{b,g}-\frac{1}{m_g}\sum_{i=1}^{m_g}\log \rho_{g,i}(\theta).
\end{equation}
The clipping-range assumption implies
\begin{equation}
\left|S_{\theta,g}-S_{b,g}\right|\leq \tau_\epsilon.
\end{equation}
Since $\sum_g A_g=0$, we can subtract the common behavior-policy scale:
\begin{equation}
\Delta_\theta=\sum_{g=1}^G A_g S_{\theta,g}=\sum_{g=1}^G A_g\left(S_{\theta,g}-\bar{S}_b\right).
\end{equation}
Therefore,
\begin{equation}
\begin{split}
\left|\Delta_\theta\right|&\leq \sum_{g=1}^G |A_g||S_{b,g}-\bar{S}_b|\\
&\quad+\sum_{g=1}^G |A_g||S_{\theta,g}-S_{b,g}|\\
&\leq (\delta_b+\tau_\epsilon)\sum_{g=1}^G |A_g|\\
&=2B(\delta_b+\tau_\epsilon).
\end{split}
\end{equation}
\end{proof}

\begin{proof}[Proof of Theorem \ref{thm:sbpo_balance}]
The stop-gradient operator has the same forward value as its argument, so $S_{\theta,g}/\operatorname{sg}(S_{\theta,g})=1$ in the forward computation. The two unsigned identities for $\mathcal{P}$ and $\mathcal{N}$ follow immediately. Since $\sum_g A_g=0$, the signed sum also equals zero. For the gradient, $\operatorname{sg}(S_{\theta,g})$ is treated as constant, hence
\begin{equation}
\nabla_\theta\left(A_g\frac{S_{\theta,g}}{\operatorname{sg}(S_{\theta,g})}\right)=\frac{A_g}{\operatorname{sg}(S_{\theta,g})}\nabla_\theta S_{\theta,g}.
\end{equation}
Summing over $g$ and multiplying by $1/G$ gives the stated gradient.
\end{proof}
\section{Additional Properties of ASymPO} \label{sec:appendix_2}

This appendix records several simple consequences of the ASymPO normalization. These results are not needed for defining the method, but they clarify how ASymPO differs from the naive current-policy objective and why the difference becomes important when positive and negative responses are evaluated at different current-policy scales.

For a fixed prompt group, write $S_g=S_{\theta,g}$ for the current average token negative log-probability of response $y_g$, and assume $S_g>0$ for all $g$. Let $\sum_{g=1}^G A_g=0$, $\mathcal{P}=\{g:A_g>0\}$, $\mathcal{N}=\{g:A_g<0\}$, and $B=\sum_{g\in\mathcal{P}}A_g=\sum_{g\in\mathcal{N}}|A_g|$. Define the response-level naive loss and the forward ASymPO loss by
\begin{equation}
\begin{split}
L_{\mathrm{naive}}&=\frac{1}{G}\sum_{g=1}^G A_g S_g,\\
L_{\mathrm{ASymPO}}&=\frac{1}{G}\sum_{g=1}^G A_g\frac{S_g}{\operatorname{sg}(S_g)}.
\end{split}
\end{equation}
The stop-gradient operator affects back-propagation but not the forward value, so $S_g/\operatorname{sg}(S_g)=1$ in the forward computation.

\begin{theorem}[Exact gap between naive loss and ASymPO loss] \label{thm:appendix_gap}
Under the assumptions above,
\begin{equation}
L_{\mathrm{ASymPO}}=0
\end{equation}
and
\begin{equation}
L_{\mathrm{naive}}-L_{\mathrm{ASymPO}}=\frac{B}{G}\left(\bar{S}_{\mathcal{P}}-\bar{S}_{\mathcal{N}}\right),
\end{equation}
where
\begin{equation}
\bar{S}_{\mathcal{P}}=\frac{1}{B}\sum_{g\in\mathcal{P}}A_gS_g, \quad \bar{S}_{\mathcal{N}}=\frac{1}{B}\sum_{g\in\mathcal{N}}|A_g|S_g.
\end{equation}
\end{theorem}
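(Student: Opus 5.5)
The argument has two steps: evaluate the forward value of the ASymPO loss, then decompose the naive loss over the sign classes of the advantages.

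\textbf{Step 1: $L_{\mathrm{ASymPO}}=0$.} The stop-gradient operator preserves forward values, and $S_g>0$ ensures the ratio is defined. Hence $S_g/\operatorname{sg}(S_g)=1$ for every $g$, and
\[
L_{\mathrm{ASymPO}}=\frac{1}{G}\sum_{g=1}^G A_g .
\]
This vanishes by the zero-sum hypothesis. It is exactly the signed identity of Theorem~\ref{thm:sbpo_balance}, which we may cite directly.

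\textbf{Step 2: the gap.} Since $L_{\mathrm{ASymPO}}=0$, it suffices to compute $L_{\mathrm{naive}}$. Split the index set into $\mathcal{P}$, $\mathcal{N}$, and the zero-advantage set $\{g:A_g=0\}$. The zero-advantage indices contribute nothing to $\sum_g A_gS_g$. On $\mathcal{N}$ we have $A_g=-|A_g|$. Therefore
\[
G\,L_{\mathrm{naive}}=\sum_{g\in\mathcal{P}}A_gS_g-\sum_{g\in\mathcal{N}}|A_g|S_g .
\]
Recognize each sum as $B$ times the corresponding weighted average: the first is $B\bar{S}_{\mathcal{P}}$ and the second is $B\bar{S}_{\mathcal{N}}$. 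Dividing by $G$ gives $L_{\mathrm{naive}}=\frac{B}{G}(\bar{S}_{\mathcal{P}}-\bar{S}_{\mathcal{N}})$, and subtracting $L_{\mathrm{ASymPO}}=0$ yields the stated gap.

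\textbf{Main obstacle.} The only real subtlety is the well-definedness of $\bar{S}_{\mathcal{P}}$ and $\bar{S}_{\mathcal{N}}$, which requires $B>0$. If all $A_g=0$, then $B=0$, both losses vanish, and the identity holds under any convention, for instance reading $B\bar{S}_{\mathcal{P}}$ directly as the sum $\sum_{g\in\mathcal{P}}A_gS_g$. I would note this degenerate case explicitly and otherwise treat the proof as routine bookkeeping.
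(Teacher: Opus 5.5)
Your proof is correct and matches the paper's argument: the forward stop-gradient ratio equals one, so $L_{\mathrm{ASymPO}}=\frac{1}{G}\sum_g A_g=0$, and splitting the naive loss over $\mathcal{P}$ and $\mathcal{N}$ gives sums you identify as $B\bar{S}_{\mathcal{P}}$ and $B\bar{S}_{\mathcal{N}}$. Your remark on the degenerate case $B=0$, where the weighted averages are undefined, is a small clarification that the paper leaves implicit.
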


\begin{proof}
Since $S_g/\operatorname{sg}(S_g)=1$ in the forward computation and $\sum_g A_g=0$, we have $L_{\mathrm{ASymPO}}=\frac{1}{G}\sum_g A_g=0$. For the naive loss,
\begin{equation}
\begin{split}
L_{\mathrm{naive}}&=\frac{1}{G}\left(\sum_{g\in\mathcal{P}}A_gS_g-\sum_{g\in\mathcal{N}}|A_g|S_g\right)\\
&=\frac{B}{G}\left(\bar{S}_{\mathcal{P}}-\bar{S}_{\mathcal{N}}\right).
\end{split}
\end{equation}
Combining the two identities gives the stated gap.
\end{proof}

Theorem~\ref{thm:appendix_gap} shows that the difference between the naive loss and ASymPO is exactly the signed scale imbalance between positive and negative responses. If the weighted average current scale of negative responses is larger than that of positive responses, then the naive loss is shifted toward the negative side. ASymPO removes this forward imbalance by normalizing each response to a common response-level scale.

\begin{theorem}[Positive-negative contribution ratio] \label{thm:appendix_ratio}
Assume $\bar{S}_{\mathcal{P}}>0$ and $\bar{S}_{\mathcal{N}}>0$. The ratio between the unsigned negative and positive response-level contributions of the naive loss is
\begin{equation}
R_{\mathrm{naive}}=\frac{\sum_{g\in\mathcal{N}}|A_g|S_g}{\sum_{g\in\mathcal{P}}A_gS_g}=\frac{\bar{S}_{\mathcal{N}}}{\bar{S}_{\mathcal{P}}},
\end{equation}
whereas the corresponding ASymPO ratio is
\begin{equation}
R_{\mathrm{ASymPO}}=\frac{\sum_{g\in\mathcal{N}}|A_g|S_g/\operatorname{sg}(S_g)}{\sum_{g\in\mathcal{P}}A_gS_g/\operatorname{sg}(S_g)}=1.
\end{equation}
\end{theorem}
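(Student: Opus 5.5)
The ratio statement follows directly from the definitions of $\bar{S}_{\mathcal{P}}$ and $\bar{S}_{\mathcal{N}}$ in Theorem~\ref{thm:appendix_gap}, combined with the forward identity $S_g/\operatorname{sg}(S_g)=1$. For the naive ratio, I will rewrite the numerator and the denominator in terms of the weighted averages. By definition, $\sum_{g\in\mathcal{N}}|A_g|S_g=B\bar{S}_{\mathcal{N}}$ and $\sum_{g\in\mathcal{P}}A_gS_g=B\bar{S}_{\mathcal{P}}$.

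We need $B>0$ and a nonzero denominator. The hypothesis $\bar{S}_{\mathcal{P}}>0$ already presupposes that $\bar{S}_{\mathcal{P}}$ is well-defined, which requires $B>0$. Together these give $B\bar{S}_{\mathcal{P}}>0$, so the ratio is defined. Cancelling the common factor $B$ then yields $R_{\mathrm{naive}}=\bar{S}_{\mathcal{N}}/\bar{S}_{\mathcal{P}}$.

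For the ASymPO ratio, I will use that the stop-gradient leaves forward values unchanged. Each summand therefore reduces to $A_g\cdot 1$ or $|A_g|\cdot 1$, as in the proof of Theorem~\ref{thm:sbpo_balance}. The numerator becomes $\sum_{g\in\mathcal{N}}|A_g|=B$ and the denominator becomes $\sum_{g\in\mathcal{P}}A_g=B$. The equality of these two sums is exactly the zero-sum condition $\sum_g A_g=0$, split over $\mathcal{P}$ and $\mathcal{N}$; the $A_g=0$ terms contribute nothing to either sum. Since $B>0$, the ratio is $1$.

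There is no real obstacle here. The only point needing care is justifying that $B>0$, so that both ratios are well-defined. I will derive this from the assumption that $\bar{S}_{\mathcal{P}}$ and $\bar{S}_{\mathcal{N}}$ exist and are positive, which forces $\mathcal{P}$ and $\mathcal{N}$ to be nonempty. I will also note that positivity of each $S_g$ is what makes the ASymPO summands well-defined.
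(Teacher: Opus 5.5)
Your proposal is correct and matches the paper's proof. For $R_{\mathrm{naive}}$ you substitute $\sum_{g\in\mathcal{N}}|A_g|S_g=B\bar{S}_{\mathcal{N}}$ and $\sum_{g\in\mathcal{P}}A_gS_g=B\bar{S}_{\mathcal{P}}$, and for $R_{\mathrm{ASymPO}}$ the forward identity $S_g/\operatorname{sg}(S_g)=1$ reduces the ratio to $B/B=1$; your extra observation that $B>0$ (forced by $\bar{S}_{\mathcal{P}}$ being well-defined) is a small rigor point the paper leaves implicit.
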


\begin{proof}
The identity for $R_{\mathrm{naive}}$ follows by substituting the definitions of $\bar{S}_{\mathcal{P}}$ and $\bar{S}_{\mathcal{N}}$. For ASymPO, $S_g/\operatorname{sg}(S_g)=1$, so the numerator becomes $\sum_{g\in\mathcal{N}}|A_g|=B$ and the denominator becomes $\sum_{g\in\mathcal{P}}A_g=B$.
\end{proof}

Theorem~\ref{thm:appendix_ratio} makes the balancing effect explicit. In the naive loss, the relative influence of negative and positive responses is not determined only by the advantages; it is multiplied by the ratio of their current response scales. If negative responses have larger current scales, then the negative side dominates. ASymPO fixes this ratio at one at the response-loss level, so the group-relative advantage balance is preserved after the loss transformation.

\begin{theorem}[Gap bound under bounded scale dispersion] \label{thm:appendix_dispersion}
Suppose there exists a scalar $\bar{S}>0$ and a constant $\delta\geq0$ such that
\begin{equation}
|S_g-\bar{S}|\leq \delta \quad \text{for all } g.
\end{equation}
Then
\begin{equation}
|L_{\mathrm{naive}}-L_{\mathrm{ASymPO}}|\leq \frac{2B\delta}{G}.
\end{equation}
\end{theorem}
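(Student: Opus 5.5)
By Theorem~\ref{thm:appendix_gap}, $L_{\mathrm{ASymPO}}=0$ in the forward computation, so it suffices to bound $|L_{\mathrm{naive}}|$. The plan reuses the centering trick from the proof of Theorem~\ref{thm:behavior_corrected_balance}: the advantages sum to zero, so a common constant can be subtracted from every $S_g$ without changing the naive loss.

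\textbf{Step 1: center the scales.} Because $\sum_g A_g=0$, we can write
\begin{equation*}
L_{\mathrm{naive}}=\frac{1}{G}\sum_{g=1}^G A_g S_g=\frac{1}{G}\sum_{g=1}^G A_g\left(S_g-\bar{S}\right).
\end{equation*}

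\textbf{Step 2: apply the triangle inequality.} Using the dispersion assumption $|S_g-\bar{S}|\le\delta$ for every $g$,
\begin{equation*}
|L_{\mathrm{naive}}|\le\frac{1}{G}\sum_{g=1}^G |A_g|\,|S_g-\bar{S}|\le\frac{\delta}{G}\sum_{g=1}^G|A_g|.
\end{equation*}

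\textbf{Step 3: evaluate the total absolute advantage.} Indices with $A_g=0$ contribute nothing, so
\begin{equation*}
\sum_g|A_g|=\sum_{g\in\mathcal{P}}A_g+\sum_{g\in\mathcal{N}}|A_g|=2B.
\end{equation*}
This gives $|L_{\mathrm{naive}}-L_{\mathrm{ASymPO}}|=|L_{\mathrm{naive}}|\le 2B\delta/G$, which is the claim.

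\textbf{Alternative route and obstacles.} One could instead start from the gap formula $\frac{B}{G}(\bar{S}_{\mathcal{P}}-\bar{S}_{\mathcal{N}})$ in Theorem~\ref{thm:appendix_gap}. Both $\bar{S}_{\mathcal{P}}$ and $\bar{S}_{\mathcal{N}}$ are convex combinations of values in $[\bar{S}-\delta,\bar{S}+\delta]$, so their difference is at most $2\delta$, giving the same bound. That route needs a separate treatment of the degenerate case $B=0$, where $\bar{S}_{\mathcal{P}}$ and $\bar{S}_{\mathcal{N}}$ are undefined. The centering argument avoids this, which is why I would use it. There is no substantive obstacle; the only care needed is to treat the forward value of $S_g/\operatorname{sg}(S_g)$ as exactly $1$ (which requires $S_g>0$), and to note that the positivity of $\bar{S}$ is not actually used.
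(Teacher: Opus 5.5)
Your proposal is correct and follows the paper's proof essentially line for line: use $L_{\mathrm{ASymPO}}=0$, subtract $\bar{S}$ using $\sum_g A_g=0$, apply the triangle inequality with $|S_g-\bar{S}|\le\delta$, and conclude with $\sum_g|A_g|=2B$. Your side remarks are also accurate: $\bar{S}>0$ is never used, and the centering route avoids the $B=0$ degeneracy that the gap-formula route would have to handle separately.
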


\begin{proof}
Since $L_{\mathrm{ASymPO}}=0$ and $\sum_g A_g=0$,
\begin{equation}
L_{\mathrm{naive}}=\frac{1}{G}\sum_{g=1}^G A_g(S_g-\bar{S}).
\end{equation}
Therefore,
\begin{equation}
\begin{split}
|L_{\mathrm{naive}}-L_{\mathrm{ASymPO}}|&=|L_{\mathrm{naive}}|\\
&\leq \frac{1}{G}\sum_{g=1}^G |A_g||S_g-\bar{S}|\\
&\leq \frac{\delta}{G}\sum_{g=1}^G |A_g|\\
&=\frac{2B\delta}{G}.
\end{split}
\end{equation}
\end{proof}

Theorem~\ref{thm:appendix_dispersion} shows that ASymPO remains close to the naive objective when the current response scales are already balanced. The gap grows only with the within-group scale dispersion. Thus ASymPO mainly changes the objective in precisely the regime where the naive loss is unreliable: when responses in the same group have very different current negative log-probability scales.

\begin{theorem}[ASymPO as response-scale normalization] \label{thm:appendix_normalization}
For each response $g$, the forward response-level loss scale induced by ASymPO is independent of $S_g$:
\begin{equation}
\left|A_g\frac{S_g}{\operatorname{sg}(S_g)}\right|=|A_g|.
\end{equation}
For the naive loss, the corresponding response-level scale is $|A_g|S_g$.
\end{theorem}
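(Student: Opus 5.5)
The statement is essentially a forward-value computation, so the plan is to argue directly from the convention fixed at the start of this appendix. The only fact needed is that the stop-gradient operator leaves forward values unchanged, so $\operatorname{sg}(S_g)=S_g$ numerically.

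First, I handle the ASymPO scale. The standing assumption $S_g>0$ guarantees that the quotient $S_g/\operatorname{sg}(S_g)$ is well defined. In the forward pass it equals $S_g/S_g=1$. Hence
\begin{equation}
\left|A_g\frac{S_g}{\operatorname{sg}(S_g)}\right|=|A_g|\cdot 1=|A_g|.
\end{equation}
This value does not involve $S_g$ at all, which is exactly the independence claim. This is the same observation already used in the proofs of Theorem~\ref{thm:sbpo_balance} and Theorem~\ref{thm:appendix_gap}, applied here to a single response rather than to a signed sum over the group.

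Second, I handle the naive scale. The naive response-level term is $A_gS_g$, and since $S_g>0$ we have $|S_g|=S_g$. Therefore
\begin{equation}
|A_gS_g|=|A_g|S_g.
\end{equation}
Unlike the ASymPO scale, this quantity grows linearly with the current scale $S_g$.

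There is no genuine obstacle in this argument. The one point to state carefully is what ``independent of $S_g$'' means. It is a claim about the forward value only. The gradient still depends on $S_g$ through the factor $1/\operatorname{sg}(S_g)$, as Theorem~\ref{thm:sbpo_balance} shows, and I would note this explicitly to avoid overclaiming.
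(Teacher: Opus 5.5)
Your proposal is correct and matches the paper's own proof: use the forward identity $S_g/\operatorname{sg}(S_g)=1$ for the ASymPO term, and use $S_g>0$ to get $|A_gS_g|=|A_g|S_g$ for the naive term. Your remark that the independence from $S_g$ holds only for the forward value, and that the gradient still carries the factor $1/\operatorname{sg}(S_g)$, is a sensible clarification consistent with the paper.
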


\begin{proof}
The ASymPO identity follows from $S_g/\operatorname{sg}(S_g)=1$. The naive response-level contribution is $A_gS_g$, whose absolute value is $|A_g|S_g$ because $S_g>0$.
\end{proof}

Theorem~\ref{thm:appendix_normalization} states the basic mechanism behind ASymPO. The naive loss allows the current response scale $S_g$ to multiply the advantage magnitude. ASymPO removes this multiplier in the forward response-level loss, while the stop-gradient denominator still allows gradients to flow through $S_g$. This is why ASymPO can reduce scale-driven dominance without discarding the policy-gradient signal.

\section{Complete ASymPO Algorithm and Empirical Stabilization} \label{sec:appendix_3}

This appendix gives the complete learner-side ASymPO algorithm and discusses practical stabilizers used in empirical implementations. ASymPO is intended for asynchronous group-relative RL where rollout workers may sample responses from a stale behavior policy, but the learner uses only current-policy probabilities. The behavior policy is used to generate text, while its token probabilities are neither stored nor used by the ASymPO update.

For a prompt $x$, let the rollout system produce a group of responses $\{y_g\}_{g=1}^G$, where $y_g=(a_{g,1},\ldots,a_{g,m_g})$. Let $r_g=r(x,y_g)$ be the reward and define the group-relative advantage
\begin{equation}
A_g=r_g-\frac{1}{G}\sum_{j=1}^G r_j.
\end{equation}
For the current learner policy $\pi_\theta$, define
\begin{equation}
\begin{aligned}
p_{g,i}(\theta)=\pi_\theta(a_{g,i}\mid x,a_{g,<i}), \\
S_{\theta,g}=-\frac{1}{m_g}\sum_{i=1}^{m_g}\log p_{g,i}(\theta).
\end{aligned}
\end{equation}
ASymPO minimizes
\begin{equation} \label{eq:appendix_sbpo_loss}
\mathcal{L}_{\mathrm{ASymPO}}(\theta)=\frac{1}{G}\sum_{g=1}^G A_g\frac{S_{\theta,g}}{\operatorname{sg}(S_{\theta,g})},
\end{equation}
where $\operatorname{sg}(\cdot)$ denotes the stop-gradient operator. The forward value of each normalized response scale is one, so the response-level loss inherits the zero-sum balance of the advantages. During back-propagation, however, the denominator is treated as a constant, and the update direction remains the usual policy-gradient direction: positive-advantage responses are reinforced and negative-advantage responses are suppressed.

\begin{algorithm}[t]
\caption{Scale-Balanced Policy Optimization for one learner update}
\label{alg:sbpo_complete}
\begin{algorithmic}[1]
\Require Current policy $\pi_\theta$, prompt batch $\mathcal{B}$, group size $G$, reward function $r$, learning rate $\eta$
\State $\mathcal{L}\gets 0$
\ForAll{$x\in\mathcal{B}$}
\State Sample or receive responses $\{y_g=(a_{g,1},\ldots,a_{g,m_g})\}_{g=1}^G$ from rollout workers
\For{$g=1,\ldots,G$}
\State $r_g\gets r(x,y_g)$
\EndFor
\State $\hat{r}\gets G^{-1}\sum_{j=1}^G r_j$
\For{$g=1,\ldots,G$}
\State $A_g\gets r_g-\hat{r}$
\For{$i=1,\ldots,m_g$}
\State $p_{g,i}(\theta)\gets \pi_\theta(a_{g,i}\mid x,a_{g,<i})$
\EndFor
\State $S_{\theta,g}\gets -m_g^{-1}\sum_{i=1}^{m_g}\log p_{g,i}(\theta)$
\State $\mathcal{L}\gets \mathcal{L}+|\mathcal{B}|^{-1}G^{-1}A_g S_{\theta,g}/\operatorname{sg}(S_{\theta,g})$
\EndFor
\EndFor
\State $\theta\gets \theta-\eta\nabla_\theta\mathcal{L}$
\State \Return updated policy $\pi_\theta$
\end{algorithmic}
\end{algorithm}

In practice, Eq.~\eqref{eq:appendix_sbpo_loss} is implemented at the token level. While the per-response normalization balances the loss scale across responses, it does not constrain individual token probabilities, which can cause numerical instability in two regimes. For $A_g<0$, the optimizer is asked to reduce token probabilities; if a token already has $p_{g,i}(\theta)\ll 1$, the factor $\partial(-\log p)/\partial p=-1/p$ grows without bound, and that single token can dominate the gradient. For $A_g\geq 0$, tokens whose probability already approaches~1 still receive gradient updates pushing them higher, contributing to overfitting without meaningfully improving the response.

A simple stabilizer is to clip each token's log-probability before it enters the ASymPO reduction, with the clipping direction determined by the sign of the response advantage. Define bounds $p_{\mathrm{low}},p_{\mathrm{high}}\in(0,1)$. For a negative-advantage response, tokens with probability below $p_{\mathrm{low}}$ are already sufficiently suppressed and should not be driven further toward zero. For a positive-advantage response, tokens with probability above $p_{\mathrm{high}}$ are already sufficiently reinforced and should not be pushed further toward one. Concretely,
\begin{equation}
\hat{\ell}_{g,i}(\theta)=
\begin{cases}
\operatorname{clip}\!\bigl(\log p_{g,i}(\theta),\;-\infty,\;\log p_{\mathrm{high}}\bigr), & A_g\geq 0,\\[6pt]
\operatorname{clip}\!\bigl(\log p_{g,i}(\theta),\;\log p_{\mathrm{low}},\;+\infty\bigr), & A_g<0.
\end{cases}
\end{equation}
The clipped response scale is then
\begin{equation}
\hat{S}_{\theta,g}=-\frac{1}{m_g}\sum_{i=1}^{m_g}\hat{\ell}_{g,i}(\theta).
\end{equation}
Replacing $S_{\theta,g}$ with $\hat{S}_{\theta,g}$ in Eq.~\eqref{eq:appendix_sbpo_loss} gives the stabilized objective
\begin{equation} \label{eq:appendix_sbpo_stable}
\mathcal{L}_{\mathrm{ASymPO\text{-}stable}}(\theta)=\frac{1}{G}\sum_{g=1}^G A_g\frac{\hat{S}_{\theta,g}}{\operatorname{sg}(\hat{S}_{\theta,g})}.
\end{equation}

The clipping operates differently for the two cases. For $A_g\geq 0$, clipping $\log p$ from above at $\log p_{\mathrm{high}}$ means that once a token's probability exceeds $p_{\mathrm{high}}$, its log-probability is capped; the gradient through that token is identically zero because the clip sits at the boundary, and the optimizer stops reinforcing it. For $A_g<0$, clipping $\log p$ from below at $\log p_{\mathrm{low}}$ means that once a token's probability falls below $p_{\mathrm{low}}$, its log-probability is similarly capped and its gradient is zero, shielding the update from the $1/p$ explosion. In both cases, the token still participates in the response-scale average $\hat{S}_{\theta,g}$, so the per-response normalization structure of ASymPO is preserved; the only tokens that contribute gradient are those whose probabilities lie inside the unclipped region.

This clipping scheme is simpler than the masking heuristic used in earlier ASymPO stabilization experiments. Masking removes confident tokens from both the numerator and denominator of the response scale, which breaks exact per-response balance. Clipping, in contrast, keeps all tokens in the reduction and produces zero gradient naturally at the boundary of the feasible region. The bounds should be chosen conservatively: typical values are $p_{\mathrm{low}}\in[0.01,0.05]$ and $p_{\mathrm{high}}\in[0.95,0.99]$. Monitoring the fraction of tokens that hit each clip boundary provides a useful diagnostic for whether the bounds are set too aggressively.

Other standard safeguards remain compatible. The denominator can be floored via $\operatorname{sg}(\max\{\hat{S}_{\theta,g},\epsilon\})$ to avoid large multipliers when most tokens in a response are clipped. Global gradient clipping is still recommended, since per-token log-probability clipping bounds the loss scale but does not constrain parameter-space gradients. A lightweight KL or entropy regularizer can be added when further control of policy drift is desired, although this reintroduces an explicit regularization target and should be kept conceptually separate from the clipping stabilizer described here.
\section{Supplementary Experiments} \label{sec:appendix_4}

\begin{table*}[t]
\centering
\renewcommand{\arraystretch}{1.12}
\setlength{\dashlinedash}{0.6pt}
\setlength{\dashlinegap}{2pt}
\setlength{\tabcolsep}{3.6pt}
\begin{tabular}{@{}lrrrrrrr@{}}
\toprule
Method & AIME24 & AIME25 & MATH500 & AMC23 & GSM8K & Minerva & Avg. \\
\midrule
\multicolumn{8}{c}{\textit{Qwen3-1.7B-Base}} \\
\hdashline
Naive Loss & \multicolumn{7}{c}{Collapsed during training} \\
GPG & \multicolumn{7}{c}{Collapsed during training} \\
GRPO & \textbf{16.67} & \textbf{23.33} & \textbf{85.60} & \textbf{65.22} & 94.24 & 42.28 & \textbf{54.56} \\
SPO & \textbf{16.67} & \textbf{23.33} & 84.00 & 63.04 & \textbf{94.92} & 41.54 & 53.92 \\
AsymPO & \textbf{16.67} & \textbf{23.33} & 83.60 & 58.70 & 94.09 & \textbf{42.65} & 53.17 \\
\midrule
\multicolumn{8}{c}{\textit{LLaMA-3.2-3B-Instruct}} \\
\hdashline
Naive Loss & \multicolumn{7}{c}{Collapsed during training} \\
GPG & \multicolumn{7}{c}{Collapsed during training} \\
GRPO & 23.33 & 6.67 & 70.99 & 47.83 & \textbf{93.56} & \textbf{34.18} & 46.09 \\
SPO & 26.67 & \textbf{7.08} & 67.39 & 45.65 & 92.34 & 32.35 & 45.25 \\
AsymPO & \textbf{33.33} & 6.67 & \textbf{73.39} & \textbf{54.35} & 93.48 & 33.94 & \textbf{49.19} \\
\midrule
\multicolumn{8}{c}{\textit{Qwen3-4B-Base}} \\
\hdashline
Naive Loss & \multicolumn{7}{c}{Collapsed during training} \\
GPG & \multicolumn{7}{c}{Collapsed during training} \\
GRPO & \textbf{30.00} & \textbf{26.67} & \textbf{90.40} & \textbf{73.91} & \textbf{97.73} & 46.69 & \textbf{60.90} \\
SPO & 23.33 & 16.67 & 89.60 & \textbf{73.91} & 96.89 & \textbf{47.97} & 58.06 \\
AsymPO & \textbf{30.00} & 20.00 & 88.60 & 73.04 & 96.74 & 44.85 & 58.87 \\
\bottomrule
\end{tabular}
\caption{Pass@8 accuracy on mathematical reasoning benchmarks using $8$ evaluation rollouts per problem. The average is computed over all six listed benchmarks. The best value within each model block among evaluated methods with final checkpoints is bolded. ``Collapsed during training'' indicates that the corresponding model-method training run collapsed, leaving no meaningful final checkpoint for evaluation; Figure~\ref{fig:training_rewards} shows the Qwen3-1.7B-Base case as a representative example.}
\label{tab:main_pass8}
\end{table*}

We conduct an additional experiment using DAPO-Math-17K \citep{yu_2026_dapo} as the training-data source. The training set is constructed by randomly sampling $4$k examples from DAPO-Math-17K. Unless otherwise specified, all training and evaluation settings follow the main experimental setup in Section~\ref{sec:experiments}, including the asynchronous RL configuration, rollout count, sequence-length limits, learning rate, training duration, and evaluation protocol. This supplementary experiment is conducted only for Qwen3-1.7B-Base.

Table~\ref{tab:dapo17k_qwen3_17b} reports the final benchmark results. The overall pattern is consistent with the main experiments in Section~\ref{sec:experiments}. The naive current-policy loss and GPG again collapse during asynchronous training, while GRPO, SPO, and AsymPO produce stable final checkpoints. This shows that the collapse of unbalanced current-policy training is not specific to the MATH training subset used in the main experiments, and that explicit scale control remains important when the training data source is changed.

Among the completed methods, the current-policy-only scaled objectives are competitive with, and in this experiment stronger than, GRPO on the aggregate metrics. For mean@8, AsymPO obtains the best average score, improving over GRPO from $37.18$ to $38.66$ and over SPO from $38.13$ to $38.66$. Its gains are broad across benchmarks, with the best scores on AIME24, MATH500, AMC23, GSM8K, and Minerva-Math, while GRPO and SPO tie for the best score on AIME25. For pass@8, SPO gives the highest average score, $57.53$, followed by AsymPO at $56.83$ and GRPO at $54.70$. The benchmark-level results are mixed: SPO is strongest on AIME24 and ties GRPO on AIME25, while AsymPO is strongest on MATH500, GSM8K, Minerva-Math, and ties SPO on AMC23. These results reinforce the main conclusion that behavior-policy probabilities are not required for stable and competitive asynchronous RL, but they also show that the relative ranking between SPO and AsymPO can depend on the metric and data source.

\begin{table*}[t]
\centering
\renewcommand{\arraystretch}{1.12}
\setlength{\dashlinedash}{0.6pt}
\setlength{\dashlinegap}{2pt}
\setlength{\tabcolsep}{3.6pt}
\begin{tabular}{@{}lrrrrrrr@{}}
\toprule
Method & AIME24 & AIME25 & MATH500 & AMC23 & GSM8K & Minerva & Avg. \\
\midrule
\multicolumn{8}{c}{\textit{Mean@8}} \\
\hdashline
Naive Loss & \multicolumn{7}{c}{Collapsed during training} \\
GPG & \multicolumn{7}{c}{Collapsed during training} \\
GRPO & 6.25 & \textbf{4.58} & 63.55 & 39.37 & 81.77 & 27.53 & 37.18 \\
SPO & 8.75 & \textbf{4.58} & 65 & 38.75 & 81.71 & 30.0 & 38.13 \\
AsymPO & \textbf{9.58} & 3.33 & \textbf{66.92} & \textbf{39.69} & \textbf{81.79} & \textbf{30.65} & \textbf{38.66} \\
\midrule
\multicolumn{8}{c}{\textit{Pass@8}} \\
\hdashline
Naive Loss & \multicolumn{7}{c}{Collapsed during training} \\
GPG & \multicolumn{7}{c}{Collapsed during training} \\
GRPO & 16.67 & \textbf{16.67} & 82.8 & 65 & 94.47 & 52.57 & 54.7 \\
SPO & \textbf{30} & \textbf{16.67} & 84.2 & \textbf{67.5} & 94.62 & 52.21 & \textbf{57.53} \\
AsymPO & 26.67 & 13.33 & \textbf{85.4} & \textbf{67.5} & \textbf{95.15} & \textbf{52.94} & 56.83 \\
\bottomrule
\end{tabular}
\caption{Mean@8 and Pass@8 accuracy on mathematical reasoning benchmarks for Qwen3-1.7B-Base trained on a randomly sampled 4k subset of DAPO-17K. All other experimental settings follow Section~\ref{sec:experiments}.}
\label{tab:dapo17k_qwen3_17b}
\end{table*}

\end{document}